\newcommand{\bm}[1]{\mbox{\boldmath$ #1 $\unboldmath}}
\newtheorem{proposition}{Proposition}%[section]
\newtheorem{lamma}{Lamma}
\newtheorem{theorem}{Theorem}%[section]
\begin{document}

\title{Do-AIQ: A Design-of-Experiment Approach to Quality Evaluation of AI Mislabel Detection Algorithm}
\author{Lian, J.$^a$, Choi, K.$^b$, Veeramani, B.$^b$, Hu, A.$^b$, Freeman, L.$^a$,\\
Bowen, E.$^b$, and Deng, X$^a$\footnote{Address for correspondence: Xinwei Deng, Professor of Statistics, Co-Director of Statistics and Artificial Intelligence Laboratory at Virginia Tech (Email: xdeng@vt.edu).}
\date{%
    $^a$Department of Statistics, Virginia Tech\\%
    $^b$Deloitte \& Touche LLP\\[2ex]%
}
}

\maketitle

\begin{abstract}
The quality of Artificial Intelligence (AI) algorithms is of significant importance for confidently adopting algorithms in various applications such as cybersecurity, healthcare, and autonomous driving. This work presents a principled framework of using a design-of-experimental approach to systematically evaluate the quality of AI algorithms, named as Do-AIQ. Specifically, we focus on investigating the quality of the AI mislabel data algorithm against data poisoning. The performance of AI algorithms is affected by hyperparameters in the algorithm and data quality, particularly, data mislabeling, class imbalance, and data types. To evaluate the quality of the AI algorithms and obtain a trustworthy assessment on the quality of the algorithms, we establish a design-of-experiment framework to construct an efficient space-filling design in a high-dimensional constraint space and develop an effective surrogate model using additive Gaussian process to enable the emulation of the quality of AI algorithms. Both theoretical and numerical studies are conducted to justify the merits of the proposed framework. The proposed framework can set an exemplar for AI algorithm to enhance the AI assurance of robustness, reproducibility, and transparency.
\end{abstract}

\section{Introduction}

Great advancement has been made by machine learning algorithms on various scientific and engineering areas \citep{ma2019trafficpredict,ben2020cybersecurity}.
However, the safety and quality assurance of these algorithms are still a significant concern. For example, a small malicious perturbation on data can deceive AI algorithms and result in catastrophe when applying them to real-life practices \citep{chen2020frank}.
Therefore, systematically evaluating the quality of AI algorithms is of great importance for the assurance of using the algorithms in practice. 
In this work, we present a principled {\bf Do-AIQ} framework of using a {\bf D}esign-{\bf o}f-experiment approach to systematically evaluate the {\bf AI} {\bf Q}uality.
The proposed framework can set an exemplar for AI algorithm practitioners to enhance the AI assurance of robustness, reproducibility, and transparency.

Specifically, we focus on investigating the quality of the AI mislabel detection (MLD) algorithm against data poisoning. In classification tasks, mislabeling the responses can disturb model training and undermine the performance of algorithms by simply assigning wrong labels to training data. 
% Unintended mislabeling is also common due to expensive cost of labeling the responses in many applications. 
Various MLD algorithms \citep{malossini2006detecting,guan2011identifying,vu2019robust} have been developed in the literature. For example, \cite{pulastya2021assessing} proposed a structure including a variational autoencoder and a simple classifier to construct the so-called mislabeling score to distinguish the mislabeled data from normal data. It is known that the performance of MLD algorithms can be affected by two major groups of factors, hyperparameters in the algorithm and data quality factors, including the number of mislabeled observations in training data, class imbalance, and data types.
However, no comprehensive evaluation exists to assess the assurance (such as robustness, reproducibility, and transparency) of these methods. Besides, the validity of the evaluation metrics and processes is not verified. The proposed Do-AIQ approach will fill these gaps and shed a light on the assessment of MLD AI algorithms and on understanding their limitations. 
%Therefore, a systematic approach to investigate the quality of these methods is necessary and emergent to ensure the safety and assurance of MLD algorithms and overall AI application safety and quality using these algorithms.
%The proposed Do-AIQ approach will shed a light on not only the assessment of these methods but also understanding their limitations. 

There are two major groups of factors affecting performance of the MLD algorithm. The first group is data quality factors. Typically, class imbalance is one essential data quality factor. For MLD algorithms, the number of  mislabeled observations in training data is one key factor that affects the performance of the algorithm. The data type (i.e., dataset of interest) can be another data quality factor. The second group is hyperparameters of the algorithms, such as the weights in the loss function and certain threshold values. 
To comprehensively evaluate the quality of the AI algorithms and obtain a trustworthy assessment on the quality of the algorithms, we propose a design-of-experimental framework to construct an efficient space-filling design in a high-dimensional constraint space and develop an effective surrogate model based on an additive Gaussian process to enable the emulation of the quality of AI algorithms. 
Specifically, we consider the design space consisting of both continuous and categorical factors with certain constrains. For such a high-dimensional constraint space, we adopt constraint space-filling design \citep{joseph2016space} to systematically investigate how the data quality affects the quality of AI algorithm when internal structure of AI algorithms varies. It paves a foundation for understanding the uncertainty of AI algorithms and finding optimal configuration of hyperparameters of the MLD algorithm.

The main contribution of this work is as follows. First, we propose a principled framework of using a design-of-experimental approach to systematically evaluate the quality of AI algorithms against data poisoning. The developed framework is not restricted to the MLD algorithm, but can be used for evaluating other AI algorithms, especially when the data quality and algorithm structure (i.e., hyperparameters) are intertwined. Second, to systematically investigate how the data quality affects the quality of AI algorithm when the internal structure of AI algorithms varies, we propose an effective design criterion with an efficient construction algorithm to obtain a space-filling design in a high-dimensional constraint space to investigate the quality of MLD algorithms in terms of detection accuracy and prediction accuracy. Specifically, we consider the design space consisting of three continuous factors without constraint, ten continuous factors of class proportions with linear constraint, and one binary factor for ``data type". The construction algorithm is efficient by leveraging the simplicity of coordinate descent in discrete optimization and constraint continuous optimization. Third, due to the complexity of design criterion and design space, an initial design is crucial to enable the design construction algorithm. Our method of initial design based on algebraic construction is very fast in computation with flexible run sizes. Fourth, we adopt an additive Gaussian process model as a surrogate model to emulate the quality of AI algorithms as a function of data quality factors and AI algorithm’s hyperparameters. The use of an additive Gaussian process can accommodate both continuous and categorical factors of interest, providing accurate prediction and uncertainty quantification of the quality of the algorithm. 

%The proposed Do-AIQ framework paves a way for understanding the uncertainty of AI algorithms and finding optimal configuration of hyperparameters of the MLD algorithm.

% The rest of the paper is organized as follows:  Section \ref{sec:prop.framework} details the proposed Do-AIQ framework, including introduction of the detection algorithm, response variables as the measure of performance, experimental factors, design construction and finalized design runs, and modeling method. Section \ref{sec:model.analysis} reports the data analysis result of the experimental data. %Section \ref{sec:related.literature} gives a literature review of related works.
% Section \ref{sec:conclusion_future_direction} discusses the limitation of the proposed method and several future directions.

\section{The Proposed Do-AIQ Framework}\label{sec:prop.framework}
Figure \ref{fig:DO-AIQ} illustrates the general idea of the proposed Do-AIQ framework. It aims to investigate how various factors affect the performance of AI algorithms by a careful  design of experiment (DoE) and a proper emulator.
In this section, we will firstly describe a recent MLD algorithm in the literature. 
Then we detail the proposed Do-AIQ framework with design factors and responses in Section \ref{factor.response}, design construction in Section \ref{design.construction}, and surrogate modeling in Section \ref{sec:GPmodel}. 

\begin{figure}[h]
\begin{center}
\includegraphics[width=0.95\textwidth]{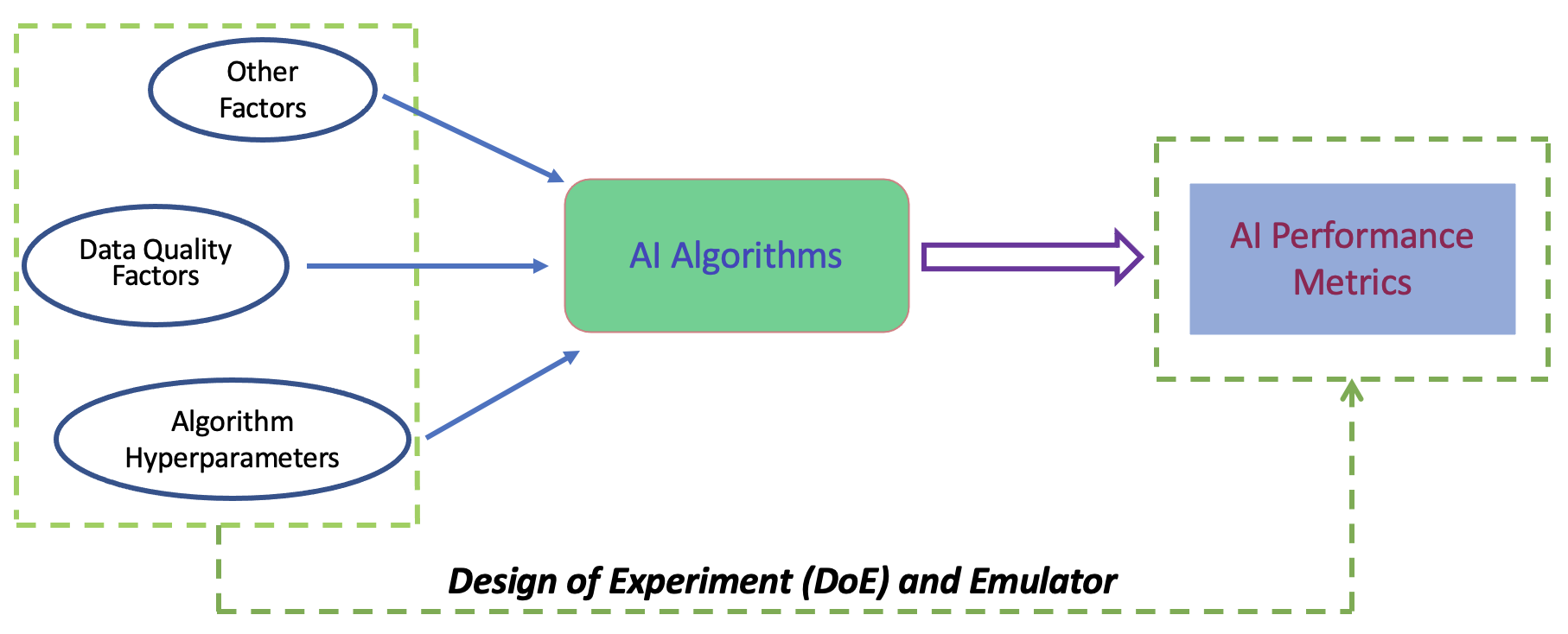}
\caption{An illustration of the proposed Do-AIQ framework.}\label{fig:DO-AIQ}
\end{center}
\end{figure}
 
\subsection{The Detection Algorithm}\label{detect.algorithm}
Without loss of generality, 
we consider the $m$-class classification problem with data $(\bm U_{i}, c_{i})$, where $U_{i}=(u_{1(i)},...,u_{r(i)})$, $u_{j(i)}$ is $j^{th}$ element of $U_{i}$, $j=1,...,r$ and $c_{i} \in \{1, \ldots, m\}$.
Here we use \cite{pulastya2021assessing} as an example of the MLD algorithm to present the proposed framework on how to comprehensively evaluate the quality of the AI algorithms. This algorithm considered a structure including one variational autoencoder (VAE) and one simple sigmoid classifier to extract mislabeling signals, as shown in Figure \ref{fig:vae}.
%displays the details of the structure: the Variational autoencoder has two components, encoder and decoder. Input of the encoder is the image data. 
The encoder is based on convolutional layers and produces two dense layers to encode mean and variance of the latent layer $\bm v$ of 100 dimensions. Then they put latent layer $\bm v$ into the decoder to reconstruct the image. The latent layer $\bm v$ is also leveraged as the input of the simple sigmoid classifier. 
For this AI algorithm, they use a composite loss function as
\begin{align}
    L=w L_{ELBO} + (1-w) L_{CL},
    \nonumber
\end{align}
where $0 \le w \le 1$ is the weight, the $L_{ELBO} = E_{q(\bm v|\bm U)}[\log(p(\bm U| \bm v))] - KL[q(\bm v| \bm U)||p(\bm v)]$ is the evidence lower bound loss (ELBO) function for the VAE, and $L_{CL}= -\sum t(c_i) log(p(c_{i}))$ is the cross entropy loss function for the classifier.
Here $KL[\cdot]$ is the Kullback-Leibler divergence, $t(\cdot)$ is the true distribution, and $p(\cdot)$ is the predicted distribution. 
Note that the VAE and the classifier are trained simultaneously.  
Based on the trained structure, the so-called ``mislabeling score" is constructed as follows. 
For a group of given class $\{U_{i}:\forall i~ with~c_{i}=l\}$, one calculates the median of absolute deviation from the reconstructed median as $M_l = median\{|U_{recon_i}-m_l|: \forall i \mbox{ with } c_i=l \}$, where $m_l$ is the median of all reconstructed images for the group. Count element-wisely if the deviation from reconstructed image to the median is greater than $\alpha \times M_l$ (i.e., $\sum_{j=1}^{r}\mathbbm{1}{(|u_{j(recon_i)}-m_{j(l)}|> \alpha M_{j(l)})}$, where $\mathbbm{1}(\cdot)$ is an indicator function; $u_{j(recon_i)}$, $m_{j(l)}$, and $M_{j(l)}$ correspond to the $j^{th}$ element of $U_{recon_i}$, $m_l$, and $M_l$).
If the count is greater than a threshold, the algorithm marks the item $i$ as mislabeled. It is seen that $\alpha$ is a hyperparameter.

\begin{figure}%[h]
\begin{center}
\includegraphics[width=0.95\textwidth]{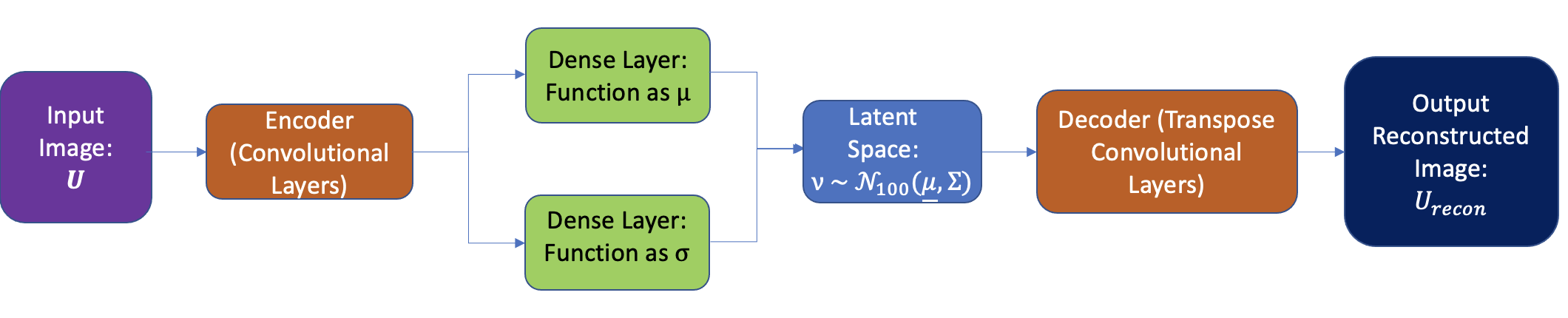}
\caption{The structure based on variational autoencoder (VAE) trained to extract invariant features for given classes as an prerequisite to construct a mislabel score.}\label{fig:vae}
\end{center}
\end{figure}

\subsection{Design Factors and Response Variables}\label{factor.response}

As shown in Figure \ref{fig:DO-AIQ}, the quality of the MLD algorithm is affected by data quality factors, algorithm hyperparameters, and other factors, such as degradation of hardware, and stability of the internet. 
In this work, we focus on investigating the effects of data quality factors and algorithm hyperparameters.

For the MLD algorithm,
% in Section \ref{detect.algorithm}
one influential factor is the weight in the loss function. 
The weights ratio $\frac{w}{1-w}$ is considered as one factor, denoted as $z_1$. The value of $\alpha$ as the threshold of deviation is also critical for detecting the mislabeled data points. Consequently, we consider the hyperparameter $\alpha$ as another factor of interest, denoted as $z_{2}$. For the data quality factors, we mainly consider the proportion of mislabeling in the training data, the class imbalance in training data, and the type of datasets.
First, the percentage of mislabeled data is considered as an important factor, denoted as $z_3$.
Intuitively, when the proportion of mislabeled data are high in training data, it will be difficult to extract informative features with inaccurate information on responses.  
Besides, the class imbalance can undermine both detection accuracy and classification accuracy since sufficient information about the minority classes could be absent. It is important to investigate the robustness of the MLD algorithm with respect to the class imbalance in training data as well as the proportions of mislabeled data. 
Thus, we assume that the class imbalance corresponds to proportions of classes in the training set as $x_{1},x_{2},..,x_{m}$ with constraint $\sum_{l=1}^{m}x_l=1$ and $0 \leq x_l \leq 1$, $l=1,2...,m$.
Third, it is known that the MLD algorithm can have different performances on different types of benchmark datasets. 
Thus, we consider $k$ benchmark datasets with $k$ different types $DS_1, \ldots, DS_k$ for being used as a categorical factor 
\begin{equation}
    z_4 =
    \begin{cases}
     1 & \text{dataset for use is $DS_{1}$},\\
     \vdots \\
     k & \text{dataset for use is $DS_{k}$}. \nonumber\\
    \end{cases}
\end{equation}

For the response variables as the performance metrics of MLD algorithm, we consider both detection accuracy $y_{1}$ and prediction accuracy $y_{2}$.
Prediction accuracy is the classification accuracy. 
For the MLD algorithm, detection accuracy is an important metric to assess mislabeled data detection methods.

\subsection{Design Construction}\label{design.construction}

Note that the factors we consider %in Section \ref{factor.response}
contain both continuous factors $z_{1}, z_{2}, z_{3}$ without constraint, categorical factor $z_{4}$, and $m$ continuous factors $x_{1}, x_{2}, \ldots, x_{m}$ with constraint.
For such a complicated high-dimensional constraint space, it is not practical to choose a large set of random design points (i.e., combination of factors) from the search space to investigate the quality of the MLD algorithm due to the limited computational resources.
However, finding a set of representative design points in such a complicated space is not trivial.
To address this issue, we consider a space-filling design in the constraint space by leveraging the maximum projection from \cite{joseph2015maximum} under the setting of the constraint space. 

First, we construct a space-filling design for $x_{1}, x_{2}, \ldots, x_{m}$ with constraint $\sum_{l=1}^{m}x_l=1$ and $0 \leq x_l \leq 1$, $l=1,2 \ldots, m$.
Denote the design with $N$ points as $\bm X_{D}= (x_{il})_{N \times m}$, $i=1,\ldots, N; l=1,2, \ldots, m$. 
To pursue the space-filling on a constraint space, we consider finding the design by
\begin{align}\label{eq: constraint-Maxpro}
    \bm X_{D} &= \arg \min_{\bm X} \sum_{i=1}^{N-1}\sum_{j=i+1}^{N}\frac{1}{\prod_{l=1}^{m}(x_{il}-x_{jl})^2} \nonumber \\
              & \quad s. t. \quad  \sum_{l=1}^{m}x_{il}=1,  \sum_{l=1}^{m}x_{jl}=1;  \\
              & \quad   \quad   0 \leq x_{il} \leq 1,  0 \leq x_{jl} \leq 1, \forall i,j,m. \nonumber
              %& i=1, \ldots, N,   j = 1, \ldots, N, l = 1, \ldots, m. \nonumber
\end{align}

Note that the above constraint $\sum_{l=1}^{m} x_l=1, 0 \leq x_l \leq 1, l=1,2...,m $ defines a subspace where all dimensions have the equal importance.
It implies that one could consider the modified maximin criterion \citep{joseph2015maximum} 
\begin{align}\label{eq: maxmin}
f(\bm X|\bm \theta)=\sum_{i=1}^{N-1}\sum_{j=i+1}^{N}\left( \frac{1}{\sum_{l=1}^{m}\theta_{l}(x_{il}- x_{jl})^2} \right )^{p/2}
\end{align}
for searching the optimal design with $\bm \theta = (\theta_{1}, \ldots, \theta_{m})'$ following a uniform distribution. 
The following theorem justifies that our adopted criterion is the modified maximin criterion in expectation.

\begin{theorem}\label{thr:valid}
Suppose $\bm X=(x_{il})_{N\times m}$ is a design on the subspace $S_{b} = \{\bm x \in \mathcal{R}^{m}: \sum_{l=1}^{m} x_l=1, 0 \leq x_l \leq 1, l=1,2...,m\}$. 
For $f(\bm X|\bm \theta)$ in \eqref{eq: maxmin} with $p=2m$ and $\bm \theta$ following a uniform distribution in the region $H = \{\bm \theta:  0 \leq \theta_{l} \leq 1, \sum_{l=1}^{m} \theta_{l}=1 \}$. Then 
\begin{align*}
E_{\bm \theta}(f(\bm X| \bm \theta))&=\int_{H} f(\bm X|\bm \theta)d p(\theta)\\ &= C \sum_{i=1}^{N-1} \sum_{j=i+1}^{N}   \frac{1}{\prod_{l=1}^{m}(x_{il}-x_{jl})^{2m}},
\end{align*}
where $C$ is a constant.
\end{theorem}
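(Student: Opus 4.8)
The plan is to combine the linearity of the expectation with a classical Dirichlet (Feynman-type) simplex integral. Since $E_{\bm\theta}$ is linear and the double sum defining $f(\bm X|\bm\theta)$ in \eqref{eq: maxmin} has only finitely many terms, it suffices to evaluate the single expectation $E_{\bm\theta}\big[(\sum_{l=1}^{m}\theta_{l}d_{l})^{-m}\big]$ for one fixed pair $(i,j)$, where I abbreviate $d_{l}=(x_{il}-x_{jl})^{2}\ge 0$ and use $p/2=m$; summing the resulting per-pair values and extracting the common factor then yields the statement. Before integrating I would first record that, for a design with distinct rows, the relevant gaps satisfy $d_{l}>0$, so on the simplex $\sum_{l}\theta_{l}d_{l}\ge(\min_{l}d_{l})\sum_{l}\theta_{l}=\min_{l}d_{l}>0$; hence the integrand is bounded and the expectation is finite, which disposes of any convergence concern at the faces of $H$.

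The heart of the argument is the evaluation of $\int_{H}(\sum_{l}\theta_{l}d_{l})^{-m}\,dp(\bm\theta)$, where $dp(\bm\theta)$ is the uniform probability measure on $H$ (a constant density, equal to the reciprocal of the $(m-1)$-dimensional volume of $H$, and differing from the flat simplex measure only by a constant). I would read this off the Dirichlet integral identity $\int_{H}\frac{\prod_{l}\theta_{l}^{a_{l}-1}}{(\sum_{l}\theta_{l}d_{l})^{a_{1}+\cdots+a_{m}}}\,d\bm\theta=\frac{\prod_{l}\Gamma(a_{l})}{\Gamma(\sum_{l}a_{l})}\prod_{l=1}^{m}d_{l}^{-a_{l}}$, specialized to $a_{1}=\cdots=a_{m}=1$. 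This choice is exactly the one matching the hypotheses: the weight $\prod_{l}\theta_{l}^{a_{l}-1}\equiv 1$ is the uniform density, and the denominator exponent $\sum_{l}a_{l}=m$ is precisely $p/2$. The specialization collapses to $\int_{H}(\sum_{l}\theta_{l}d_{l})^{-m}\,d\bm\theta=\frac{1}{(m-1)!}\prod_{l=1}^{m}d_{l}^{-1}=\frac{1}{(m-1)!}\prod_{l=1}^{m}(x_{il}-x_{jl})^{-2}$, i.e.\ a constant multiple of the per-pair term of the constraint-MaxPro objective in \eqref{eq: constraint-Maxpro}.

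To keep the proof self-contained rather than merely invoking the Dirichlet formula, I would establish the needed identity by either of two routes. The first uses the Schwinger representation $A^{-m}=\frac{1}{\Gamma(m)}\int_{0}^{\infty}t^{m-1}e^{-tA}\,dt$ with $A=\sum_{l}\theta_{l}d_{l}$; Fubini then lets me carry out the $\bm\theta$-integration against the product structure first and reduces everything to a one-dimensional $t$-integral. The second is induction on $m$, whose base case $m=2$ is the elementary computation $\int_{0}^{1}\big(d_{2}+\theta(d_{1}-d_{2})\big)^{-2}\,d\theta=1/(d_{1}d_{2})$ and whose inductive step integrates out one simplex coordinate at a time. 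Either route is routine once the substitution is set up.

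Finally I would reassemble the pieces: summing the per-pair value over all $\binom{N}{2}$ pairs and absorbing $1/(m-1)!$ together with the normalizing constant of $dp(\bm\theta)$ (and the harmless Jacobian relating the surface measure on $H$ to Lebesgue measure in the free coordinates) into a single constant $C$ identifies $E_{\bm\theta}(f(\bm X|\bm\theta))$ with $C$ times the constraint-MaxPro objective in \eqref{eq: constraint-Maxpro}. This is exactly the equivalence the theorem asserts between minimizing the expected modified-maximin criterion \eqref{eq: maxmin} and minimizing \eqref{eq: constraint-Maxpro}. I expect the single genuine obstacle to be the clean evaluation of the simplex integral and the careful bookkeeping of the multiplicative constants; the reduction by linearity, the finiteness check, and the reassembly are all routine.
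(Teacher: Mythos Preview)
Your proposal is correct. The paper's proof is exactly your second self-contained route: it parametrizes $H$ by the first $m-1$ coordinates, sets $Q_{m}(m,a)=\int_{S_{m-1}}\big(\sum_{l<m}\theta_{l}d_{l}+(1-\sum_{l<m}\theta_{l})a\big)^{-m}\,d\theta_{1}\cdots d\theta_{m-1}$, obtains the recursion $Q_{m}(m,a)=\frac{1}{(m-1)(a-d_{m-1})}\big(Q_{m-1}(m-1,d_{m-1})-Q_{m-1}(m-1,a)\big)$ by integrating out $\theta_{m-1}$, and unwinds from the base case $Q_{2}(2,a)=1/(d_{1}a)$ to $Q_{m}(m,d_{m})=1/\big((m-1)!\,d_{1}\cdots d_{m}\big)$, citing \cite{joseph2015maximum}. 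Your primary route through the Feynman/Dirichlet identity $\prod_{l}A_{l}^{-a_{l}}=\frac{\Gamma(\sum a_{l})}{\prod_{l}\Gamma(a_{l})}\int_{H}\prod_{l}\theta_{l}^{a_{l}-1}\big(\sum_{l}\theta_{l}A_{l}\big)^{-\sum a_{l}}\,d\bm\theta$ with $a_{l}\equiv 1$ is a genuinely different and more transparent argument: it shows at a glance why $p/2=m$ is the unique exponent for which the expectation collapses to the pure product $\prod_{l}d_{l}^{-1}$, whereas the inductive bookkeeping hides this structural fact. One small correction to your finiteness check: ``distinct rows'' does not by itself force $d_{l}>0$ for every $l$; you need the stronger (implicit MaxPro) hypothesis that no two design points share any coordinate, otherwise both sides are $+\infty$ and the identity is vacuous. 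Finally, note that your computation and the paper's both produce the per-pair term $\prod_{l}(x_{il}-x_{jl})^{-2}$, matching \eqref{eq: constraint-Maxpro}; the exponent $2m$ in the displayed statement is evidently a typo for $2$.
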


To implement the optimization in \eqref{eq: constraint-Maxpro} in search of optimal design, it is a nonlinear optimization with the number of parameters as $Nm$. Thus, the objective function can be very complex and the optimization process can be hard to execute. 
To overcome this difficulty, we adopt the coordinate exchange from discrete optimization and coordinate descent from continuous optimization 
to solve the optimization efficiently. Algorithm 1 summarizes the procedure. 
Specifically, we randomly choose $N$ runs from the candidate set $\mathcal{A}$ as the initial design.
Then one can find one run in the initial design that has the maximum $\sum_{i\neq j}\prod_{l=1}^{m}\frac{1}{(x_{il}-x_{jl})^2}$.
Replace this run with one run in the candidate set. 
If the criterion is reduced, we can further optimize the criterion by traditional constraint optimization; 
If not, replace this run with another run in the candidate set until one run can reduce the criterion.

% \begin{algorithm}\caption{Optimization Procedure}\label{alg:opt}
% {\bf Step 1:} Randomly choose $N$ runs from the candidate set $\mathcal{A}$ as the initial design $X$. Set $t$ as the number of redundant iterations.\\
% {\bf Step 2:} Replace one row  of $X$ with one run $x_{a}$ from $\mathcal{A}$, denote the new design as $X_{new}$. If $f(X)>f(X_{new})$, go to Step 3; else repeat Step 2, and $t=t+1$. If $t>10000$, break the whole loop;\\
% {\bf Step 3:} Constraint continuous optimization to replace $x_{a}$ with $x_{opt}$, update $X_{new}$. If $(|f(X)-f(X_{new})|<\epsilon)$, algorithm converges and take $X=X_{new}$; else go back to step 2, take $X=X_{new}$.
% \end{algorithm}

\begin{algorithm}%[tb]
\caption{Optimization Procedure}
\label{alg:opt}

\textbf{Input}: Candidate set $\mathcal{A}$, the number of design runs $N$, the number of redundant iterations $t$, the objective function $f(\cdot)$\\
% \textbf{Parameter}: Optional list of parameters\\
\textbf{Output}: $X$ 
\begin{algorithmic}[1] %[1] enables line numbers
\STATE Let $t=0$. Randomly choose $N$ runs from the candidate set $\mathcal{A}$ as the initial design $X$. Initialize $X_{new}, \quad s. t. \quad |f(X)-f(X_{new})|>\epsilon$, 
\WHILE{$t \leq 10000$}
\STATE Replace one row  of $X$ with one run $x_{a}$ from $\mathcal{A}$, denote the new design as $X_{new}$.
\IF {$f(X)>f(X_{new})$}
\STATE Constraint continuous optimization to replace $x_{a}$ with $x_{opt}$, update $X_{new}$.
\IF {$(|f(X)-f(X_{new})|<\epsilon)$}
\STATE $X=X_{new}$; algorithm converges and break the loop.
\ELSE
\STATE $X=X_{new}$.
\ENDIF
\ELSE
\STATE $t=t+1$
\ENDIF
\ENDWHILE
\STATE \textbf{return} $X$
\end{algorithmic}
\end{algorithm}

\begin{figure}[h]
\begin{center}
\includegraphics[width=0.95\textwidth]{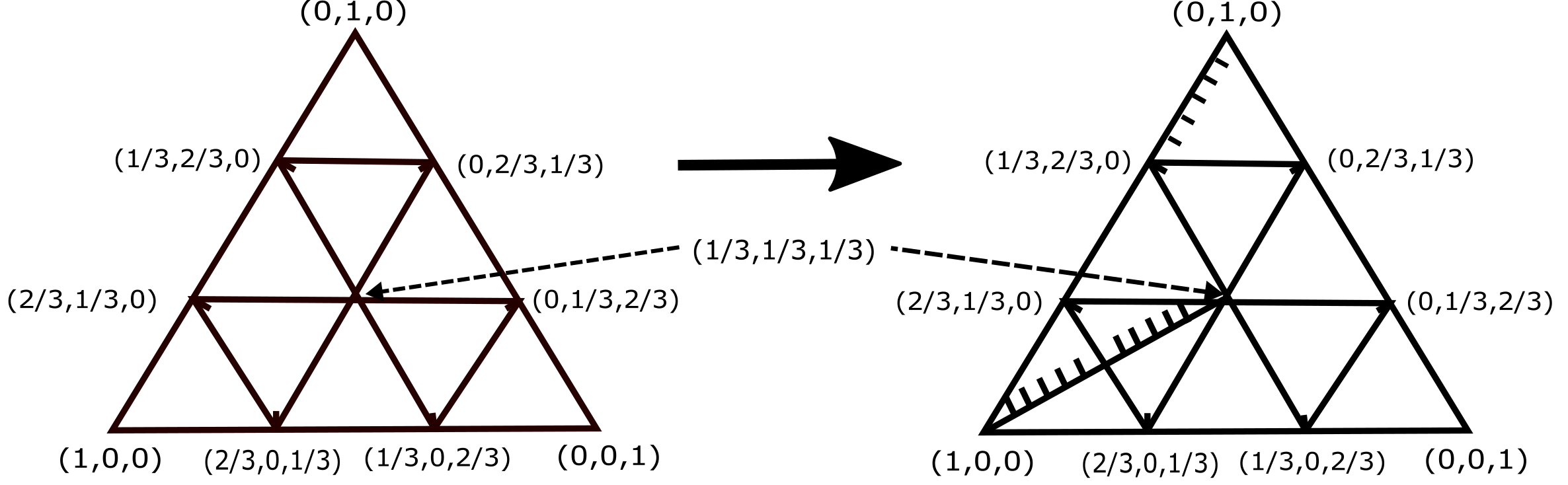}
\caption{3D simplex centroid design to construct candidate set.}\label{fig:candidate_set}
\end{center}
\end{figure}

Note that Algorithm 1 requires a good candidate set in the constraint space. A proper candidate set should have points distributed uniformly on the subspace. A naive approach is to generate a space-filling design in the hypercube $S = \{\bm x \in \mathcal{R}^{m}: 0 \leq x_l \leq 1, l=1,2..., m\}$ and project the points into hyperplane $S_{b}=\{ \bm x \in \mathcal{R}^{m}: 0 \leq x_l \leq 1,\sum_{l=1}^{m} x_l=1 \}$. 
However, such an approach can have a high rejection rate, especially in the high-dimensional setting since not all projected points will be in the constraint space $S_{b}$.
To address this challenge, we propose an algebraic construction of the candidate set $\mathcal{A}$ based on the simplex centroid design \citep{scheffe1963simplex}. 

For $m$ dimensions, there are $2m-1 \choose m$ possible simplex centroid design points.
The candidate set $\mathcal{A}$ consists of all points in the simplex centroid design and points on the segment between the two points of the simplex centroid design.
Figure \ref{fig:candidate_set} illustrates an example of how we deploy the original candidate set to construct more samples for a candidate set of three dimensions.
Note that such a construction of the candidate set is naturally space-filling based on the simplex centroid design, is also very flexible on the run size and computationally fast. 

With the space-filling design for the continuous factors $x_{1}, \ldots, x_{m}$ with linear constraint, we then can construct a cross array design between $x_{1}, \ldots, x_{m}$, other continuous factors, and categorical factors as the complete design. 
Here the Latin hypercube design \citep{park1994optimal} is used to design construction for other continuous factors. 
In the next section, we will detail using the proposed design and collected responses to build a surrogate model for studying how hyperparameters of AI algorithms and data quality factors impact the quality of the MLD algorithm.

\subsection{Surrogate Modeling}\label{sec:GPmodel}
% After the experimental design is implemented to collect experimental data, an appropriate surrogate model is needed to understand how various factors affect the performance of the AI algorithm. 
With a large number of factors in our investigation, a quadratic linear regression model can contain too many model parameters while a first-order linear regression model can be too simple to emulate the intricate relationship between the factors and metrics. 
Therefore, a proper surrogate model is needed. Here we consider the Gaussian process to be used for surrogate modeling.
Note that our DoE contains both continuous factors and discrete factors. To address this challenge, 
we adopt additive Gaussian process (AGP) used in \citep{deng2017additive} as our surrogate.

With loss of generality, suppose that the response is $\tilde{y}$, and the corresponding covariates are $\tilde{\bm w} =(\tilde{\bm x}, \tilde{\bm z})$, 
where $\tilde{\bm x}$ is a continuous covariate vector, and $\tilde{\bm z} = (\tilde{z}_{1}, \ldots, \tilde{z}_{\tilde{q}})$ is a binary covariate vector with $\tilde{q}$ dimensions. 
Then we consider AGP for modeling the response $\tilde{y}$ as
\begin{align}
    \tilde{y}(\tilde{\bm w}) = \mu+ \sum_{h=1}^{\tilde{q}} G_{h}(\tilde{z}_{h},\tilde{\bm x}),
\end{align}
where $G_{h}$'s are independent Gaussian processes with mean zero and the covariance function $\Sigma_{h}$'s. 
Here $\mu$ is the overall mean and we set $\mu=0$ for simplicity.
The covariance function of $G_{h}$ between a pair of observations $\tilde{y}_{i}(\tilde{\bm w}_{i})$ and $\tilde{y}_{j}(\tilde{\bm w}_{j})$ is defined as $\Sigma_{h}(\tilde{\bm w}_{i}, \tilde{\bm w}_{j}) = \tau^2 \psi_{h}(\tilde{\bm x}_{i}, \tilde{\bm x}_{j}) \phi_{h}(\tilde{z}_{hi},\tilde{z}_{hj})$ where
\begin{align*}
    \psi_{h}(\tilde{\bm x}_{i}, \tilde{\bm x}_{i}) & = exp(-\frac{\|\tilde{\bm x}_{i} - \tilde{\bm x}_{j}\|^2}{\vartheta_{h}}) + \eta \mathbbm{1}(i = j), \\
    \phi_{h}(\tilde{z}_{hi},\tilde{z}_{hj}) & = exp( \mathbbm{1}(\tilde{z}_{hi},\tilde{z}_{hj}) \log \rho_{h}),
\end{align*}
where $\mathbbm{1}(\cdot)$ is an indicator function. Here $\vartheta_{h} \ge 0$ and $0 \le \rho_{h} \le 1$. The parameter $\eta \ge 0$ is a nugget effect.  
Thus, the overall covariance function $\Sigma(\cdot,\cdot)=\sum_{h=1}^{\tilde{q}}\Sigma_{h}(\cdot,\cdot)$.

When using AGP for our design factors in Section \ref{factor.response}, the continuous covariate vector $\tilde{\bm x}$ contains $x_{1}, \ldots, x_{m}$ and $z_{1}, \ldots, z_{3}$. Recall that the categorical factor $z_4$ has $k$ levels as $k$ variaous types of datasets. 
While we consider the binary covariate vector $\tilde{\bm z} = (\tilde{z}_{1}, \ldots, \tilde{z}_{\tilde{q}})$ for AGP with $\tilde{z}_{h}$ to be
\begin{align*}
\tilde{z}_{h} = \left\{
  \begin{array}{ll}
    0, & \hbox{$DS_{1}$ is used;} \\
    1, & \hbox{$DS_{h+1}$ is used.}
  \end{array}
\right.
\end{align*}
where $h=1,2,...,k-1$
Now assume that the collected data are  $(\tilde{\bm y}, \tilde{\bm W})$, where $\tilde{\bm y}$ is the $N$ dimensional vector of responses and $\tilde{\bm W}$ the corresponding covariate matrix. 
Given a new design point $\tilde{w}_{*}$, its corresponding response $\tilde{y}_{*}$ can have 
\begin{center}
$\begin{pmatrix}
\tilde{\bm y}\\
\tilde{y}_{*}
\end{pmatrix}
\sim \mathcal{N}_{N+1}
\begin{pmatrix}
\begin{pmatrix}
   \Vec{0}_{N}\\
   0
\end{pmatrix}
,
 \begin{pmatrix}
     \Sigma_{\tilde{\bm y},\tilde{\bm y}} & \Sigma_{\tilde{\bm y}, \tilde{y}_{*}}\\
     \Sigma_{\tilde{y}_{*}, \tilde{\bm y}} & \Sigma_{\tilde{y}_{*}, \tilde{y}_{*}}
 \end{pmatrix}
\end{pmatrix}$,
\end{center}
where $\Sigma_{\tilde{\bm y},\tilde{\bm y}}$ is the covaraince matrix for $\tilde{\bm y}$,  $\Sigma_{\tilde{\bm y}, \tilde{y}_{*}}$ is the covariance vector between $\tilde{\bm y}$ and $\tilde{y}_{*}$ ($\Sigma_{ \tilde{y}_{*}, \tilde{\bm y}}$ is the covariance vector between $\tilde{y}_{*}$ and $\tilde{\bm y}$ ), and $\Sigma_{\tilde{y}_{*}, \tilde{y}_{*}}$ is variance of $\tilde{y}_{*}$.
Therefore, it is obvious that the conditional distribution of $\tilde{y}_{*}$ given $\tilde{\bm y}$ can be obtained for prediction and uncertainty quantification as
$\tilde{y}_{*}|\tilde{\bm y} \sim \mathcal{N}( \mu_{*}, \sigma^{2}_{*})$ where
\begin{align}\label{eq:mean}
    \mu_{*} & = \Sigma_{\tilde{y}_{*}, \tilde{\bm y}} \Sigma_{\tilde{\bm y},\tilde{\bm y}}^{-1} \tilde{\bm y},\\
     \sigma^{2}_{*} & =\Sigma_{\tilde{y}_{*}, \tilde{y}_{*}} - \Sigma_{ \tilde{y}_{*},\tilde{\bm y}}  \Sigma_{\tilde{\bm y},\tilde{\bm y}}^{-1}  \Sigma_{\tilde{\bm y}, \tilde{y}_{*}}. \nonumber
\end{align}
For simplicity, define $\bm \rho=(\rho_1,...,\rho_{\tilde{q}}), \bm \vartheta=(\vartheta_1,...,\vartheta_{\tilde{q}})$. To estimate unknown parameters $\bm \beta=(\bm \rho, \bm \vartheta, \eta)$ and $\tau^{2}$, calculate $\hat{\tau}^2=\tilde{\bm y}^{T}\Sigma_{\tilde{\bm y},\tilde{\bm y}}^{-1}\tilde{\bm y}/N$ firstly and then minimize the negative log-likelihood function given $\hat{\tau}^2$ as
\begin{align*}
    \hat{\bm \beta} = \arg \min_{\bm \beta}  \left \{ \log(\det(\Sigma_{\tilde{\bm y},\tilde{\bm y}})) + Nlog (\tilde{\bm y}^{T}\Sigma_{\tilde{\bm y},\tilde{\bm y}}^{-1} \tilde{\bm y}  ) \right \}.
\end{align*}

The optimization can be solved by the derivative-based method. The estimation process is conducted in a iterative manner. See more details in the Appendix.

\section{Numerical Experiments}\label{sec:model.analysis}

\subsection{Design Validation and Data Collection}
The validity of the proposed space-filling design 
% in Section \ref{design.construction}
for the class proportions (i.e., $x_{1}, \ldots x_{m}$) is examined by comparing it with a benchmark method, Kennard and Stone algorithm \citep{kennard1969computer}. Here we consider two performance measures ``Coverage" and ``Maxmin" to evaluate the space-filling property, denote as $PM_{1}$ and $PM_{2}$ as
\begin{align*}
PM_{1}&= \frac{1}{\bar{d}}\sum_{i=1}^{N}(d_i-\Bar{d_i})^2, \\
PM_{2}&= Max(d_i),
\end{align*}
where $d_i= \underset{j\neq i}{min}\|x_i-x_j\|$ and $\Bar{d}=\underset{all ~i}{mean}(d_i)$; $i,j=1,2,...,N$. These two metrics are the bigger the better. The comparison results are reported in Table \ref{tab:metric_compare} with $m=10$. 
It is seen that the proposed design has consistent advantage over the benchmark method (K-S) in terms of $Coverage$. 
In terms of $Maxmin$, 
the proposed design is better than the K-S method when the design runs is relatively large. 

We conduct data analysis based on additive Gaussian process modeling with respect to prediction accuracy $y_{1}$ and detection accuracy $y_{2}$. 
The range of weights ratio $z_1$ is from 1/500 to 500. 
The range of $z_2$ (i.e., the value of $\alpha$) is from $1$ to $3$. 
The range of percentage of mislabeled data $z_3$ is from $10\%$ to $50\%$.
Here we use {\bf MNIST} \citep{deng2012mnist} and {\bf FashionMNIST}  \citep{xiao2017fashion} as two types of benchmark datasets. 
The input images of these two datasets are both $28\times28$, and both have $10$ classes. Thus, the proportions of classes are denoted $x_1,...,x_{10}$, such that $0 \le x_l \le 1$, $l=1,...,10$. For both datasets, there are 60,000 observations of the training set and 10,000 observations of the test set. 
According to the design we obtained, we run $5$ replicates for each setting out of $2,000$ in total (i.e., $10,000$ design runs in total). 
The detection accuracy and prediction accuracy are collected for further analysis. 
See the finalized design table in the appendix.

%% table changed: coverage and maxmin removed
\begin{table}
\begin{center}
\caption{Compare our approach based on ``Maxpro'' criterion of a subspace with Kennard and Stone (K-S) Algorithm.}\label{tab:metric_compare}
\vspace{1.5ex}
\begin{tabular}{lrrrr}\hline\hline
Runs ($N$) & Method & $PM_{1}$ &  $PM_{2}$
\\\hline
50 & Proposed Design & \textbf{0.1509}   & 0.2962 \\
 & K-S  & 0.0287 & \textbf{0.3782}\\ \hline
 100 & Proposed Design & \textbf{0.2149} & \textbf{0.4490}\\
 & K-S & 0.0440 & 0.3403\\ \hline
 150 & Proposed Design & \textbf{0.1852} & \textbf{0.3926} \\
 & K-S & 0.1579 & 0.3768\\ \hline
 200 & Proposed Design & \textbf{0.2099} & \textbf{0.3795}\\
 & K-S &  0.2022 & 0.3672 
\\ \hline\hline
\end{tabular}
\end{center}
\end{table}

\subsection{Visualization of Findings} \label{sec: data-visualization}

Here, we visualize the data from the experimental results with respect to detection accuracy $y_{2}$ and prediction accuracy $y_{1}$. Figure \ref{fig:detec_3c} displays the boxplots of detection accuracy $y_{2}$ with respect to the series of specific weights ratio $z_{1}$, the threshold of deviation $z_{2}$, and the proportion of mislabeled training data $z_{3}$, given dataset MNIST and FashionMNIST. See the first two columns, displaying the relationships between hyperparameters of the algorithm, $z_{2}$, $z_{3}$  and detection accuracy $y_{2}$. Given dataset MNIST, the mean of $y_{2}$ remains at a low level when $z_{1}$ is smaller than $1$ and dramatically increases above $0.9$ and fluctuates around $0.93$ when $z_{1}$ is greater than $1$. The maximum mean is $0.9542$ and the corresponding $z_{1}$ is $154.92$. The trend of mean of $y_{2}$ versus $z_{1}$ for dataset FashionMNIST is quite similar. $y_{2}$ remains under $0.76$ when $z_{1}$ is under $1$, then fluctuates around $0.91$ as $z_{1}$ goes above $1$, and achieves the maximum point $0.9386$ with $z_{1}$ equals $154.92$.
Move to mid column of the figure, for the dataset MNIST, the pattern of the mean of $y_{2}$ with respect to corresponding $z_{2}$ is not clear. The mean of $y_{2}$ goes up and down and two
peaks $y_{2}=0.9542,0.9515$ are achieved with $z_{2}=1.7732,2.2797$. Dataset FashionMNIST gives a consistent pattern as the dataset MNIST.

\begin{figure}%[h]
\begin{center}
\includegraphics[width=0.75\textwidth]{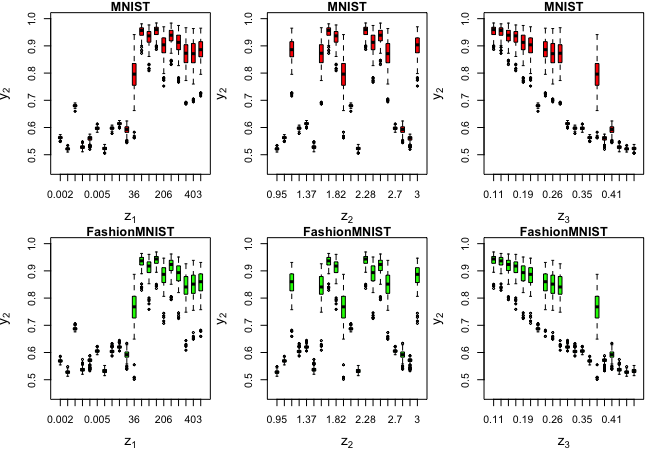}
\caption{Detection accuracy $y_{2}$ versus weights ratio $z_{1}$ (Left), the deviation of threshold $z_{2}$ (Mid), and proportion of mislabeled training data $z_{3}$ (Right) for MNIST (Upper) and FashionMNIST (Lower) datasets.}\label{fig:detec_3c}
\end{center}
\end{figure}

Now move to data quality factors. In general, mean of $y_{2}$ decreases as the proportion of mislabeled training data $z_{3}$ increases in almost a straight line for both MNIST and FashionMNIST datasets. It indicates that the percentage of mislabeled training data directly and linearly affect the algorithm's ability to distinguish mislabeled data from normal data.
The detection accuracy of two datasets is quite the same. It demonstrates the detection algorithm has similar detection accuracy on two different types of datasets. Now we introduce the variance of proportions of classes for one design run, $z_{5i}=var(x_{1i},x_{2i},...,x_{10i})$, for $i=1,2,...,10000$.
The greater the variance of proportions of classes, the more class imbalance in the training data.
See the second column of Figure \ref{fig:pred_detect_var}, for both MNIST and FashionMNIST datasets, the mean of $y_{2}$ is slightly decreasing as the variance of proportions of classes $z_{5}$ is increasing. However, the range of detection accuracy $y_{2}$ for a given $z_{5}$ is large and the minimums are quite the same.
% and the average slightly declines when there are more class imbalance in training data. 
These evidences indicate that class imbalance does negatively affect the detection accuracy but it is not a substantial factor. 
% The major factors  affects $y_2$ along with hyperparameters of the detection algorithm.

\begin{figure}%[h]
\begin{center}
\includegraphics[width=0.75\textwidth]{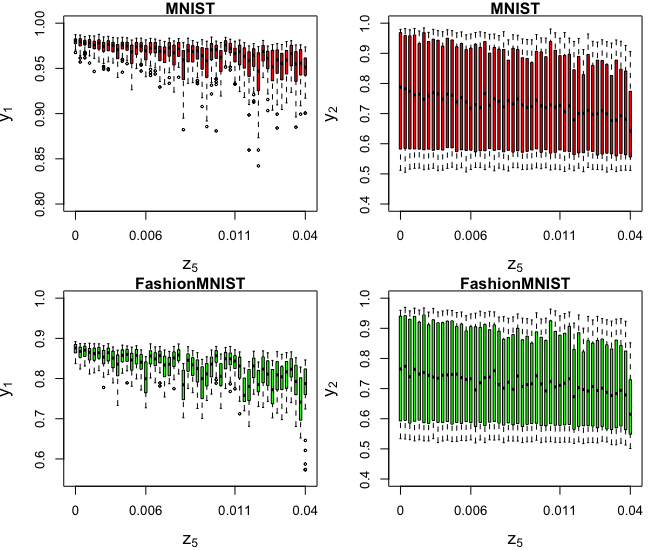}
\caption{Prediction accuracy $y_{1}$ (Right) and detection accuracy $y_{2}$ (Left) versus variance of proportions of classes $z_{5}$ for MNIST (Upper) and FashionMNIST (Lower) datasets.}\label{fig:pred_detect_var}
\end{center}
\end{figure}

We can see there are several narrow boxes (low variances) with quite low average $y_{2}$ values. It indicates that detection accuracy can stably remain in a low level. However, higher average values do not correlate with a higher variance in detection accuracy. Regardless of all points with the lower average and variance of performance, the greater the average the smaller the variance of detection accuracy. This phenomenon reveals that if the layouts of factors enable the algorithm to produce a decent average performance (approximately greater than 0.7), the algorithm is more and more stable as the performance increases.

Now consider prediction accuracy $y_1$ as the performance measure. The patterns displayed in Figure \ref{fig:pred_3c} is similar to the patterns in terms of detection accuracy $y_2$. But there are several distinctions between them.
% the mean of prediction accuracy $y_{1}$ attains the maximum point at weights ratio $z_{1}=154.92$ for both MNIST and FashionMNIST datasets. 
The patterns that $y_{1}$ versus $z_{1}$, $z_{2}$, and $z_3$ for two datasets have almost consistent trends. The curves go up and down and achieve the maximums at the consistent certain points on $X-axis$. However, the average prediction accuracy of MNIST data is greater than the average of FashionMNIST data.
% For both MNIST and FashionMNIST datasets, the mean of $y_{1}$ goes up and down consistently. The two peaks $y_{1} =0.9778$ and $0.9781$ for MNIST are achieved with $z_{2}=1.7732$ and $2.2797$ as well as the two peaks for FashionMNIST, $y_{1}=0.8595$ and $0.8603$.
\begin{figure}%[h]
\begin{center}
\includegraphics[width=0.75\textwidth]{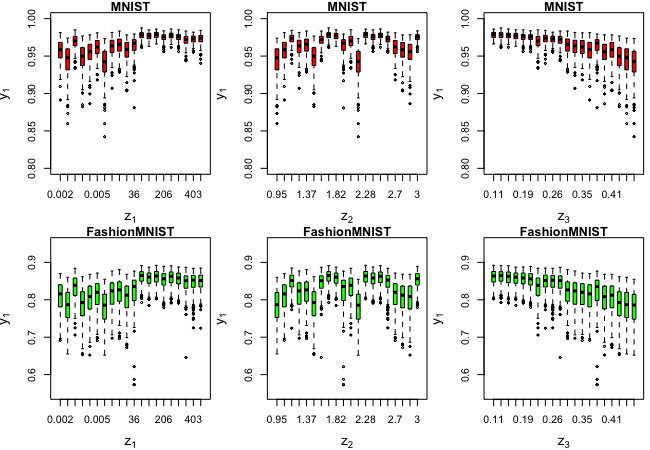}
\caption{Prediction accuracy $y_{1}$ versus weights ratio $z_{1}$ (Left), the deviation of threshold $z_{2}$ (Mid), and proportion of mislabeled training data $z_{3}$ (Right) for MNIST (Upper) and FashionMNIST (Lower) datasets.}\label{fig:pred_3c}
\end{center}
\end{figure}
% For both MNIST and FashionMNIST datasets, the curve that represents the mean of $y_{1}$ generally decreases as the proportion of training dataset $z_{3}$ increases. The maximum point is achieved at $z_3=10.98\%$. 
It indicates that the data type $z_{4}$ has a profound effect on the prediction accuracy $y_1$. When two convolutional neural network (CNN) models with the consistent structures are well trained both on the clean training datasets of MNIST and FashionMNIST, the CNN model trained on MNIST dataset has a distinctly higher prediction than the model trained on FashionMNIST. In comparison to the relationship between $z_{4}$ and detection accuracy $y_{2}$, the detection algorithm may rely on the differences between mislabeled and correctly labeled data for given classes rather than primary information in training set, whereas CNN is more dependent on the primary information. 
See the first column of Figure \ref{fig:pred_detect_var},
the trends of curves that variance of proportions of classes $z_5$ versus the mean of $y_1$ for both MNIST and FashionMNIST are quite consistent. Low $z_5$ (less imbalance) always occurs with a high prediction accuracy $y_1$. 
% Therefore, the prediction accuracy are indirectly impacted by these factors. In fact, these factors are directly related to the quality of cleaned training data.
% as more malicious data is filtered out. 
% Furthermore, the data quality factors affect the prediction accuracy indirectly by impacting the performance of MLD on the data to be cleaned.

The length of boxes is longer when the average prediction accuracy $y_1$ is greater. It shows that the high variance of prediction accuracy leads to a low mean of prediction accuracy. For prediction accuracy, we do not see short boxes that appears in the case of detection accuracy $y_{2}$, where if the average of $y_{2}$ is below a threshold, the variance $y_{2}$ is quite low. It indicates that both the average and variance of performance can represent the quality of CNN's application as they have a monotonic relationship.

\subsection{Modeling Results}\label{sec: modleing-results}

% To evaluate the performance of 
To justify the proposed surrogate model,
% in Section \ref{sec:GPmodel}, 
we partition the whole experimental data into the training set ($80\%$) and the test set ($20\%$).
The AGP is used as the surrogate model for the responses $y_{1}$ (i.e., prediction accuracy) and $y_{2}$ (i.e., detection accuracy), respectively.
For comparison, the linear regression model is used as the benchmark model to fit the training data and make prediction on the test set.
To measure the accuracy of the AGP model, we choose the mean square error (MSE) in the test set as the metric to assess the goodness-of-fit. 
For the response $y_{1}$ (i.e., prediction accuracy), the MSE of the AGP is $8.7147\times 10^{-5}$ and the MSE of the regression is $0.0028$.
For the response $y_{2}$ (i.e., detection accuracy), the MSE of the AGP is $6.1186 \times 10^{-5}$ and the MSE of the regression is $0.0006$.
Clearly, the proposed AGP models perform much better than the benchmark method. 
Furthermore, Figure \ref{fig:resid.dist} reports the  distributions of residuals of AGP models with respect to $y_{1}$ and $y_{2}$. 
They are much narrower than the corresponding regression models. 

\begin{figure}%[h]
\begin{center}
\includegraphics[width=0.65\textwidth]{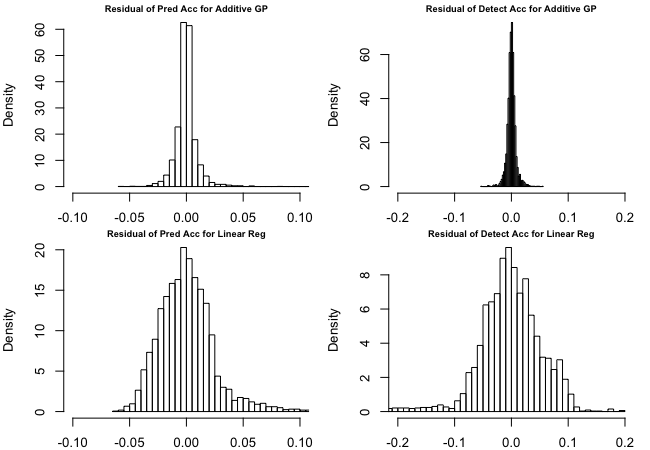}
\caption{Residual distributions of additive Gaussian model (AGP) (Upper) and linear regression model (Linear Reg) (Lower) with respect to prediction accuracy $y_{1}$ (Pred Acc) and detection accuracy $y_{2}$ (Detect Acc).}\label{fig:resid.dist}
\end{center}
\end{figure}

\subsection{Summary of Findings}\label{finding.summary}
%By summarizing the discoveries and results, %in Section \ref{sec: data-visualization} and \ref{sec: modleing-results}, 
%we obtain the following insightful findings:

First, it is not surprising that the proportions of mislabeled data significantly affects the performance of detection accuracy. Moreover, changed in class imbalance can also diminish the detection accuracy. 
However, not all data quality factors have profound effect on the detection accuracy.
In our study with two types of benchmark datasets (MNIST and FashionMNIST), the detection accuracy is robust against the type of data used in the proposed method. 
We observed that the optimal setting of two hyperparameters, weights ratio $z_{1}$ and threshold $z_{2}$ are quite similar when the data types (datasets) are different. It indicates that the MLD algorithm has a similar capacity to derive the differences between malicious items and normal items when the datasets are not the same and the hyperparameters of the detection algorithm do not have an interaction effect with data type. 
Second, the MLD algorithm turns conservative (detection accuracy in low variance) when its capacity to detect the mislabeled data is poor. 
When the detection accuracy is relatively high, in our cases, roughly larger than $0.7$, the stability and the average performance of the detection algorithm are positively associated. 
% It indicates that stability and average performance of detection accuracy are generally not significantly compromised.
Third, the detection accuracy and the classification accuracy can be of different characteristics. 
A combination of both metrics can help distinguish if the algorithm is affected by the primary information in the dataset (data type factor).
%or the information of proportions among classes.
% Finally, the patterns of main effects of $z_{1}$ and $z_{2}$ (two hyperparameters) are more sophisticated than that of data quality factors. 
% For example, the two ranges of $z_{2}$ can produce a comparably highest average performance with lowest variance (greatest quality). 
% However there is no clear pattern for the rest of the areas.  

\section{Related Work}\label{sec:related.literature}

In the literature, \cite{rushby1988quality} discussed the idea of AI quality measurement and assurance several decades ago. 
%Although AI quality is not mentioned frequently in recent research works, 
AI reliability and robustness (current concepts in AI quality) become increasingly popular in both academic research and industrial applications \citep{virani2020justification,dietterich2017steps,russell2015research}. %The advancement in technology reduces the difficulty of collecting and accessing data. 
% Among the topics of AI reliability and robustness, data quality concerns accompanied by 'big data' draws more attention from the public \citep{liu2016rethinking}. 
For example, \cite{lian2021robustness} propose a design-of-experimental framework for investigating AI robustness as it relates to class imbalance issue and distribution shift of classes between training and test data. 
Several works use experimental designs to tune hyperparameters of AI algorithms \citep{packianather2000optimizing, staelin2003parameter,balestrassi2009design, mutny2020experimental}.
% Our work focuses on one sub-field of data quality for AI algorithm, the mislabeling issue in the data. 
% In the paper, we choose one typical AI method as an example to elaborate the proposed Do-AIQ framework. 
% There are other methods for the MLD problem. For example, \cite{malossini2006detecting} proposed weighted K-nearest neighbors and kernel based approach to make the mislabeling score. Their mislabeling score is obtained by counting the number of correct classifications of several independent classifiers \citep{brodley1999identifying,guan2011identifying}.
The evaluation of the AI quality needs a meticulous design of experiment since the size of search space can be very large \citep{bell2022modeling}.
The commonly used design for the search space is to have the space-filling property \citep{joseph2016space}.
Among various space-filling designs, the Latin hypercube design and its variants have received a lot of attention \citep{jung2011artificial}.
For example, \cite{joseph2015maximum} propose a so-called MaxPro design to maximize the design's space-filling capacity on all subspaces.
Note that the design for the continuous factors of class proportions has a linear constraint, which is a mixture design in the literature \citep{cornell2011experiments}. 
Typical construction of mixture designs \citep{gomes2018space} is not for high dimensions due to the computational complexity.
% \cite{gomes2018space} summarizes several space filling designs for mixture components and demonstrates these designs in low dimensions. 
%Typically, 
Quadratic and cubic linear models \citep{PiepelCornell1994} are used to analyze mixture designs in low dimensions. 
Gaussian process (GP) models are effective as an efficient surrogate model for a complex system \citep{bernardo1998regression} in high dimensional space. Note that GP is usually valid for continuous variables \citep{cardelli2019robustness,ling2016gaussian}. When the data contains the categorical input variables, one may use one-hot encoding to transfer a categorical variable to a continuous variable \citep{garrido2020dealing}.  Additive Gaussian process \citep{deng2017additive} has been proposed to handle both qualitative and quantitative factors.

\section{Discussion}\label{sec:conclusion_future_direction} In this work, we establish a Do-AIQ framework to comprehensively investigate how data quality factors (e.g., data type, percentage of mislabeled data, class imbalance in training data) and hyperparameters of algorithms affect the quality assurance of the AI algorithms.
%\subsection{Limitations} 

There could be several limitations of the current work.
First, we choose a mislabeled data detection algorithm to describe the proposed framework. 
%on two benchmark datasets, MNIST and FashionMNIST datasets, which both have 10 classes, 
However, the Do-AIQ is not limited to this particular MLD algorithm. 
It can be extended to other AI algorithms whose quality assurance is affected by various data quality factors and their internal structure.
% Second, we currently focus on two performance measures of AI quality: classification accuracy and detection accuracy.  
% One can also include other metrics, such as F1 score, precision, recall, into our Do-AIQ framework. 
% Third,  the scalability of additive Gaussian process needs to be enhanced to handle experiments of large size.  
%
%Second, we have not considered the sensitivity of each factor when constructing the design matrix. 
Second, the number of classes in this  study is around ten, not as large as hundreds in some benchmark data. 
For data with large clasess such as the CIFAR100 dataset with 100 classes,  
the proposed design construct algorithm may encounter certain computational difficulties.

In the future work, one can consider sequential design based on additive Gaussian process to handle the sensitivity concern and limitation of design runs. 
%For the sensitivity issue, another useful tool is to incorporate sequential design with the weighted Maxpro criterion based on the optimization procedure.
Additionally, it is also appealing to extend the Do-AIQ framework for multiple types of datasets with the different numbers of classes. 
%One possible solution is to extract class imbalance features from proportions of classes, such as what we define as the variance of proportions of classes $z_5$.
Another future direction for the proposed Do-AIQ framework is to ensure fairness in the design of experiments.
%Furthermore, AI algorithms are possibly applied to completely distinct data types. 
%In this situation, the structure of AI algorithms for a specific data type can be quite different in the sense that algorithms may not use the same hyperparameters.
%One option to deal with such a challenge is to treat the data type as a combination of data quality factors and internal structure factors.

\bibliographystyle{apalike}
\bibliography{aaai23}

\clearpage
\appendix

\section{Proof of Theorem $1$}

\begin{proof}
Let $p=2m$ and $\bm \theta$ follow a uniform distribution in the region $H = \{\bm \theta:  0 \leq \theta_{l} \leq 1, \sum_{l=1}^{m} \theta_{l}=1 \}$. Then $d p(\bm \theta)=\frac{(m-1)\,!}{\sqrt{m}} d \bm \theta$. 
\begin{align*}
    E_{\bm \theta}(f(\bm X| \bm \theta)) 
    & =\int_{H} f(\bm X|\bm \theta)d p(\bm \theta)\\ 
    & = \int_{H} f(\bm X|\bm \theta) \frac{(m-1)\,!}{\sqrt{m}} d \bm \theta\\ 
    &= \int_{S_{m-1}} \sqrt{m} f(\bm X|\theta_{1},...,\theta_{m-1}, 1-\sum_{l=1}^{m-1}\theta_{l})\\ 
    & \quad \times \frac{(m-1)\,!}{\sqrt{m}} d\theta_{1}...d\theta_{m-1}\\ 
    &= (m-1)\,!  \int_{S_{m-1}} f(\bm X|\theta_{1},...,\theta_{m-1}, 1-\sum_{l=1}^{m-1}\theta_{l})\\ 
    & \quad  d\theta_{1}...d\theta_{m-1}\\ 
    =& (m-1)\,! \sum_{i=1}^{N-1}\sum_{j=i+1}^{N} \int_{S_{m-1}} f_{(i,j)}(\bm X|\theta_{1},...,\theta_{m-1})\\ 
    & \quad d\theta_{1}...d\theta_{m-1},
    \end{align*}
    where $S_{m-1}= \{\bm \theta:  0 \leq \theta_{1},\theta_{2},...,\theta_{m-1} \leq 1, \sum_{l=1}^{m-1} \theta_{l} \leq 1 \}$, $f_{(i,j)}(\bm X|\theta_{1},...,\theta_{m-1})=  \left( \sum_{l=1}^{m-1}\theta_{l}d_{(i,j)l} + (1-\sum_{l=1}^{m-1}\theta_{l})d_{(i,j)m} \right) ^{-m}$, $d_{(i,j)l}=(x_{il}-x_{jl})^2$. Suppose
    \begin{align*}
        Q_{m}(m,a)& = \int_{S_{m-1}} \left( \sum_{l=1}^{m-1}\theta_{l}d_l + (1-\sum_{l=1}^{m-1}\theta_{l})a \right )^{-m}\\
        & \quad d\theta_{1}...d\theta_{m-1}. 
    \end{align*}
    For $a\neq d_{m-1}$, $Q_{m}(m,a)=\frac{1}{(m-1)(a-d_{m-1})}(Q_{m-1}(m-1,d_{m-1})-Q_{m-1}(m-1,a))$. It is easy to see that $Q_2(2,a)=1/(d_{1}a)$. Therefore, $Q_{m}(m,a)=1/\{(m-1)\,!d_{1}...d_{m-1}a\}$, then $Q_{m}(m,d_{m})=1/\{(m-1)\,!d_{1}...d_{m-1}d_m\}$ \citep{joseph2015maximum}.
\end{proof}

% C \sum_{i=1}^{N-1} \sum_{j=i+1}^{N}   \frac{1}{\prod_{l=1}^{m}(x_{il}-x_{jl})^{2m}}
% \sum_{i=1}^{N-1}\sum_{j=i+1}^{N}\left( \frac{1}{\sum_{l=1}^{m}\theta_{l}(x_{il}- x_{jl})^2} \right )^{m}

\newpage
\section{Validation of Covariance Matrix of Additive Gaussian Process}

\begin{lamma}Schur Product Theorem.\label{schur:prod}
    If $\bm D$ is an $n \times n$ positive semidefinite matrix with no diagonal entry equal to zero and $\bm E$ is and $n \times n$ positive definite matrix, then $\bm D \circ \bm E$ is positive definite. If both $\bm D$ and $\bm E$ are positive definite, then $\bm D \circ \bm E$ is positive definite as well \citep{horn2012matrix}.
\end{lamma}

\begin{proposition}
For the collected data $(\tilde{\bm y}, \tilde{\bm W}=(\tilde{\bm X},\tilde{\bm Z}))$, where $\tilde{\bm X}$ is the $N \times \tilde{p}$ continuous covariates matrix and $\tilde{\bm Z}$ is the $N \times \tilde{q}$ binary covariates matrix. Then the covariance matrix of $\tilde{\bm y}$ is a positive definite matrix.
\end{proposition}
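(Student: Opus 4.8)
The plan is to exploit the additive structure $\Sigma_{\tilde{\bm y},\tilde{\bm y}}=\sum_{h=1}^{\tilde q}\Sigma_h$ and reduce the positive definiteness of the whole covariance matrix to that of each summand. Writing $\bm{\Psi}_h=(\psi_h(\tilde{\bm x}_i,\tilde{\bm x}_j))_{i,j=1}^N$ and $\bm{\Phi}_h=(\phi_h(\tilde z_{hi},\tilde z_{hj}))_{i,j=1}^N$, each component covariance matrix is the scaled Hadamard product $\Sigma_h=\tau^2\,\bm{\Psi}_h\circ\bm{\Phi}_h$. Since a sum of positive definite matrices is positive definite and $\tau^2>0$, it suffices to show that $\bm{\Psi}_h\circ\bm{\Phi}_h$ is positive definite for every $h$. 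By the Schur Product Theorem (Lemma~\ref{schur:prod}) this follows once I verify two facts: that $\bm{\Psi}_h$ is positive definite, and that $\bm{\Phi}_h$ is positive semidefinite with no zero diagonal entry.

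For $\bm{\Psi}_h$, the continuous part $\exp(-\|\tilde{\bm x}_i-\tilde{\bm x}_j\|^2/\vartheta_h)$ is the Gaussian (squared-exponential) kernel, which is a positive definite kernel on $\mathcal{R}^{\tilde p}$; hence its Gram matrix $\bm{K}_h$ is positive semidefinite, and in fact positive definite when the design points $\tilde{\bm x}_i$ are distinct. The nugget term $\eta\,\mathbbm 1(i=j)$ contributes $\eta\bm{I}$, so $\bm{\Psi}_h=\bm{K}_h+\eta\bm{I}$; with a strictly positive nugget $\eta>0$, or with distinct space-filling design points, $\bm{\Psi}_h$ is positive definite.

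For $\bm{\Phi}_h$, note that $\tilde z_{hi}\in\{0,1\}$ and $\phi_h(\tilde z_{hi},\tilde z_{hj})=\exp(\mathbbm 1(\tilde z_{hi}\neq \tilde z_{hj})\log\rho_h)$ equals $1$ when the two binary labels agree (including on the diagonal) and $\rho_h$ when they differ. I would then decompose
\[
\bm{\Phi}_h=\rho_h\,\bm{J}+(1-\rho_h)\,\bm{B},
\]
where $\bm{J}$ is the $N\times N$ all-ones matrix and $\bm{B}$ has $\bm{B}_{ij}=1$ if $\tilde z_{hi}=\tilde z_{hj}$ and $0$ otherwise. The matrix $\bm{J}$ is rank-one and positive semidefinite, while $\bm{B}$, after grouping indices by their binary label, is block diagonal with all-ones blocks and is therefore positive semidefinite. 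Since $0\le\rho_h\le 1$, both coefficients $\rho_h$ and $1-\rho_h$ are nonnegative, so $\bm{\Phi}_h$ is a nonnegative combination of positive semidefinite matrices, hence positive semidefinite; moreover its diagonal entries all equal $1$, so no diagonal entry vanishes.

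Combining these, $\bm{\Phi}_h$ (positive semidefinite, nonzero diagonal) and $\bm{\Psi}_h$ (positive definite) satisfy the hypotheses of Lemma~\ref{schur:prod}, giving that $\bm{\Phi}_h\circ\bm{\Psi}_h$ is positive definite; multiplying by $\tau^2>0$ preserves this, so each $\Sigma_h$ is positive definite, and summing over $h=1,\dots,\tilde q$ establishes that $\Sigma_{\tilde{\bm y},\tilde{\bm y}}$ is positive definite. The main obstacle is the categorical factor: correctly reading off the binary kernel $\phi_h$ and proving $\bm{\Phi}_h$ is positive semidefinite, where the decomposition into the all-ones matrix plus a block-diagonal all-ones matrix is the cleanest route and also makes the nonzero-diagonal hypothesis transparent. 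A secondary subtlety is justifying \emph{strict} (rather than merely semi-) definiteness of $\bm{\Psi}_h$, which is exactly where the nugget $\eta$, or distinctness of the design points, is needed.
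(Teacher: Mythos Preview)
Your proposal is correct and follows the same high-level strategy as the paper: write $\Sigma_{\tilde{\bm y},\tilde{\bm y}}=\sum_h\tau^2\,\bm\Psi_h\circ\bm\Phi_h$, argue that the continuous-kernel matrix $\bm\Psi_h$ is positive definite, that the binary-kernel matrix $\bm\Phi_h$ is positive semidefinite with unit diagonal, and then invoke the Schur Product Theorem (Lemma~\ref{schur:prod}) termwise.

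The only substantive difference is how you establish that $\bm\Phi_h$ is positive semidefinite. The paper writes $\bm\Phi_h=\tilde{\bm A}_h\begin{pmatrix}1&\rho_h\\ \rho_h&1\end{pmatrix}\tilde{\bm A}_h^{T}$, where the rows of $\tilde{\bm A}_h$ are one-hot encodings of $\tilde z_{hi}\in\{0,1\}$, so positive semidefiniteness follows immediately from the congruence with a positive definite $2\times2$ matrix. You instead use the convex-combination decomposition $\bm\Phi_h=\rho_h\bm J+(1-\rho_h)\bm B$, with $\bm J$ the all-ones matrix and $\bm B$ block-diagonal by label. Both arguments are short and valid; the paper's factorization makes the ``group correlation'' structure explicit and scales verbatim to categorical variables with more than two levels, while your decomposition is perhaps more elementary and makes the nonzero-diagonal condition equally transparent. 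Your added remark that the nugget $\eta>0$ (or distinctness of the $\tilde{\bm x}_i$) is what delivers \emph{strict} definiteness of $\bm\Psi_h$ is a useful clarification that the paper leaves implicit.
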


\begin{proof}
    The covariance matrix for $\tilde{\bm y}$ is 
    \begin{align*}
        \Sigma_{\tilde{\bm y},\tilde{\bm y}} &=  \sum_{h=1}^{\tilde{q}}(\Sigma_{h}(\tilde{\bm w}_{i}, \tilde{\bm w}_{j}))_{N\times N} \\ 
        &= \sum_{h=1}^{\tilde{q}}(\tau^2 \psi_{h}(\tilde{\bm x}_{i}, \tilde{\bm x}_{j}) \phi_{h}(\tilde{z}_{hi},\tilde{z}_{hj}))_{N \times N}\\
        &= \sum_{h=1}^{\tilde{q}} \tau^2(\psi_{h}(\tilde{\bm x}_{i}, \tilde{\bm x}_{j}))_{N\times N} \circ (\phi_{h}(\tilde{z}_{hi},\tilde{z}_{hj}))_{N\times N}, 
    \end{align*}
    where $\circ$ is Schur product, $i,j=1,2,...,N$. Denote $(\psi_{h}(\tilde{\bm x}_{i}, \tilde{\bm x}_{j}))_{N\times N}$ as $\bm C_{h}$, $(\phi_{h}(\tilde{z}_{hi},\tilde{z}_{hj}))_{N\times N}$ as $\bm B_{h}$. Note that $\bm C_{h}$'s are positive definite matrices. $\bm B_h$ can be obtained by 
    \begin{align}
        \bm B_{h} = \tilde{\bm A}_h  \begin{pmatrix} 
        1 & \rho_h\\
        \rho_h & 1
        \end{pmatrix} \tilde{\bm A}_h^{T}, \nonumber
    \end{align}
    where $(\tilde{\bm A}_h)$ is a $\tilde{q}\times 2$ matrix, and \begin{align*}
    (\tilde{\bm A}_h)_{i,1} = \left\{
  \begin{array}{ll}
    0, & \hbox{if $\tilde{z}_{hi}=0$;} \\
    1, & \hbox{if $\tilde{z}_{hi}=1$.}
  \end{array} \right.
    (\tilde{\bm A}_h)_{i,2} = \left\{
  \begin{array}{ll}
    1, & \hbox{if $\tilde{z}_{hi}=0$;} \\
    0, & \hbox{if $\tilde{z}_{hi}=1$.}
  \end{array}\right.
\end{align*}
    Since $\begin{pmatrix} 
        1 & \rho_h\\
        \rho_h & 1
        \end{pmatrix}$ is positive definite, then $B_h$ is as least positive semidefinite with all diagonal entries equal to $1$. Followed by Lamma \ref{schur:prod}, $\bm B_h \circ \bm C_h$ is positive definite. Therefore, the covariance matrix $\Sigma_{\tilde{\bm y},\tilde{\bm y}}$ is positive definite. 
\end{proof}

\section{Derivatives of the Log-Likelihood Function}\label{log.derivative}
Given $\tilde{\bm y}$ and $\tilde{\bm W}$, we build the log-likelihood function

\begin{align}
    l(\bm \beta,\tau^2)= & -\frac{N}{2}log(2\pi)-\frac{N}{2}log(\tau^2)-\frac{1}{2}log(det(\Sigma_{\tilde{\bm y}, \tilde{\bm y}}))\nonumber\\
    & -\frac{1}{2\tau^2}\tilde{\bm y}^{T}\Sigma^{-1}_{\tilde{\bm y}, \tilde{\bm y}}\tilde{\bm y}. \nonumber
\end{align}
To maximize the log-likelihood function, firstly, we maximize it with respect to $\tau^2$ given fixed $\bm \beta$. Take derivative with respect to $\tau^2$
\begin{align*}
    0 \overset{set}{=}\frac{\partial l(\tau^2|\bm \beta)}{\partial \tau^2}=-\frac{N}{2\tau^2}+\frac{1}{2(\tau^2)^2}\tilde{\bm y}^{T}\Sigma^{-1}_{\tilde{\bm y}, \tilde{\bm y}}\tilde{\bm y}.
\end{align*}
Therefore, $\hat{\tau}^2=\tilde{\bm y}^{T}\Sigma_{\tilde{\bm y},\tilde{\bm y}}^{-1}\tilde{\bm y}/N$. Then we maximize the log-likelihood given $\tau^2=\hat{\tau}^2$. Take derivatives with respect to $\bm \beta=( \bm \rho,\bm \vartheta, \eta)$
\begin{align*}
    \frac{\partial \Sigma_{h}((\tilde{z}_{hi},\tilde{\bm x}_i),(\tilde{z}_{hj},\tilde{\bm x}_j))}{\partial \vartheta_h} &=\Sigma_{h}((\tilde{z}_{hi},\tilde{\bm x}_i),(\tilde{z}_{hj},\tilde{\bm x}_j))\\
    & \quad \times \frac{\|\tilde{\bm x}_i-\tilde{\bm x}_j\|}{\vartheta_{h}^2}, \\
    \frac{\partial \Sigma_{h}((\tilde{z}_{hi},\tilde{\bm x}_i),(\tilde{z}_{hj},\tilde{\bm x}_j))}{\partial \eta}
    &=
    \begin{cases}
    1~i=j\\
    0~i\neq j
    \end{cases},\\
    \frac{\partial \Sigma_{h}((\tilde{z}_{hi},\tilde{\bm x}_i),(\tilde{z}_{hj},\tilde{\bm x}_j))}{\partial \rho_{h}}
    &=
    \begin{cases}
    0~i=j\\
    \Sigma_{h}((\tilde{z}_{hi},\tilde{\bm x}_i),(\tilde{z}_{hj},\tilde{\bm x}_j))~i\neq j
    \end{cases}, \\
    \frac{\partial \Sigma^{-1}_{\tilde{\bm y}, \tilde{\bm y}}}{\partial \bm \beta} 
    &= -\Sigma^{-1}_{\tilde{\bm y}, \tilde{\bm y}}\frac{\partial \Sigma_{\tilde{\bm y}, \tilde{\bm y}}}{\partial \bm \beta} \Sigma^{-1}_{\tilde{\bm y}, \tilde{\bm y}},\\
    \frac{\partial log(det(\Sigma_{\tilde{\bm y}, \tilde{\bm y}}))}{\partial \bm \beta} 
    &= tr(\Sigma^{-1}_{\tilde{\bm y}, \tilde{\bm y}} \frac{\partial \Sigma_{\tilde{\bm y}, \tilde{\bm y}}}{\partial \bm \beta}), \\
    \frac{\partial l(\bm \beta|\hat{\tau}^2)}{\partial \bm \beta} 
    &= -\frac{1}{2}tr(\Sigma^{-1}_{\tilde{\bm y}, \tilde{\bm y}} \frac{\partial \Sigma_{\tilde{\bm y}, \tilde{\bm y}}}{\partial \bm \beta})\\
    & \quad +\frac{N}{2}\frac{\tilde{\bm y}^{T}\Sigma^{-1}_{\tilde{\bm y}, \tilde{\bm y}}\frac{\partial \Sigma_{\tilde{\bm y}, \tilde{\bm y}}}{\partial \bm \beta} \Sigma^{-1}_{\tilde{\bm y}, \tilde{\bm y}}\tilde{\bm y}}{\tilde{\bm y}^{T}\Sigma^{-1}_{\tilde{\bm y}, \tilde{\bm y}}\tilde{\bm y}},
\end{align*}
and obtain $\hat{\bm \beta}$ by minimizing the negative log-likelihood function given $\hat{\tau}^2$ through derivative based optimization, such as Broyden–Fletcher–Goldfarb–Shanno (BFGS) algorithm. Repeatedly maximize the log-likelihood function with respect to $\tau^2$ and $\beta$ until they converge.

\newpage
\section{Finalized Design Table}
A finalized design is a cross-array \citep{wu2011experiments} constructed over binary factor $z_4$ (the data type), continuous factors without linear constraint, $z_{1}, z_{2}, z_{3}$ (the weights ratio, the threshold of deviation, the proportion of mislabeling in training set), and continuous factors with linear constraint, $x_{1}, \ldots, x_{10}$ (the proportions of classes in training set), as shown in Table \ref{tab:design-table}. 

\begin{table}[h]
\begin{center}
\caption{Tables for settings of binary variable, settings of factors without constraint, and settings of factors with constraint.}\label{tab:design-table}
\vspace{1.5ex}
\begin{tabular}{lr}\hline\hline
Run & $z_{4}$
\\\hline
$1$ & $1$ \\
$2$ & $0$ 
\\ \hline\hline\\
\end{tabular}\\

\begin{tabular}{crrrrr}\hline\hline
Run & $z_{1}$ & $z_{2}$ & $z_{3}$\\\hline
$1$ & $333.5$ & $2.416$ & $0.190$\\
$2$ & $465.8$ & $1.151$ & $0.243$\\
... & ... & ... & ...\\  
$20$ & $0.005$ & $2.701$ & $0.345$\\\hline\hline\\
\end{tabular}\\
\begin{tabular}{crrrr}\hline\hline 
Run & $x_{1}$ & $x_{2}$ & ... & $x_{10}$\\\hline
$1$ & $0.02$ & $0.09$ & ... & $0.02$\\
$2$ & $0.14$ & $0.05$ & ... & $0.09$\\
... & ... & ... & ... & ...\\
$50$ & $0.05$ & $0.19$ & ... & $0.03$
\\ \hline\hline      
\end{tabular}
\end{center}
\end{table}

\newpage
\end{document}